\newcolumntype{x}[1]{%
>{\raggedleft\hspace{0pt}}p{#1}}%
\definecolor{scan1blue}{HTML}{0072bd}
\definecolor{scan2red}{HTML}{d95319}
\definecolor{commentclr}{RGB}{34, 139, 34}
\newcommand{\tikzcircle}[2][red,fill=red]{\tikz[baseline=-0.5ex]\draw[#1,radius=#2] (0,0) circle ;}%
\newcommand{\ra}[1]{\renewcommand{\arraystretch}{#1}}
\newtheorem{prop}{Proposition}
\theoremstyle{definition}
\newtheorem{definition}{Definition}
\newcommand{\newreptheorem}[2]{%
\newtheorem*{rep@#1}{\rep@title}%
\newenvironment{rep#1}[1]{%
 \def\rep@title{#2 \ref*{##1}}%
 \begin{rep@#1}}%
 {\end{rep@#1}}}
\LetLtxMacro\orgvdots\vdots
\LetLtxMacro\orgddots\ddots
\DeclareRobustCommand\vdots{%
	\mathpalette\@vdots{}%
}
\newcommand*{\@vdots}[2]{%
	\sbox0{$#1\cdotp\cdotp\cdotp\m@th$}%
	\sbox2{$#1.\m@th$}%
	\vbox{%
		\dimen@=\wd0 %
		\advance\dimen@ -3\ht2 %
		\kern.5\dimen@
		\dimen@=\wd2 %
		\advance\dimen@ -\ht2 %
		\dimen2=\wd0 %
		\advance\dimen2 -\dimen@
		\vbox to \dimen2{%
			\offinterlineskip
			\copy2 \vfill\copy2 \vfill\copy2 %
		}%
	}%
}
\DeclareRobustCommand\ddots{%
	\mathinner{%
		\mathpalette\@ddots{}%
		\mkern\thinmuskip
	}%
}
\newcommand*{\@ddots}[2]{%
	\sbox0{$#1\cdotp\cdotp\cdotp\m@th$}%
	\sbox2{$#1.\m@th$}%
	\vbox{%
		\dimen@=\wd0 %
		\advance\dimen@ -3\ht2 %
		\kern.5\dimen@
		\dimen@=\wd2 %
		\advance\dimen@ -\ht2 %
		\dimen2=\wd0 %
		\advance\dimen2 -\dimen@
		\vbox to \dimen2{%
			\offinterlineskip
			\hbox{$#1\mathpunct{.}\m@th$}%
			\vfill
			\hbox{$#1\mathpunct{\kern\wd2}\mathpunct{.}\m@th$}%
			\vfill
			\hbox{$#1\mathpunct{\kern\wd2}\mathpunct{\kern\wd2}\mathpunct{.}\m@th$}%
		}%
	}%
}
\let\oldnl\nl%
\newcommand{\nonl}{\renewcommand{\nl}{\let\nl\oldnl}}%
\def\Gr{\mathrm{Gr}}
\def\Graff{\mathrm{Graff}}
\def\bA{\mathbb{A}}
\def\bY{\mathbb{Y}}
\def\S{\mathcal{S}}
\title{\LARGE \bf Global Data Association for SLAM with\\%
3D Grassmannian Manifold Objects}
\author{Parker C. Lusk and Jonathan P. How%
	\thanks{The authors are with the Department of Aeronautics and Astronautics, Massachusetts Institute of Technology.
	    {\texttt{\{plusk, jhow\}@mit.edu.}}}
    \thanks{This work is supported by the Ford Motor Company.}
}%
\begin{document}

\maketitle
\thispagestyle{plain}
\pagestyle{plain}

\begin{abstract} 
Using pole and plane objects in lidar SLAM can increase accuracy and decrease map storage requirements compared to commonly-used point cloud maps.
However, place recognition and geometric verification using these landmarks is challenging due to the requirement for global matching without an initial guess.
Existing works typically only leverage either pole or plane landmarks, limiting application to a restricted set of environments.
We present a global data association method for loop closure in lidar scans using 3D line and plane objects simultaneously and in a unified manner.
The main novelty of this paper is in the representation of line and plane objects extracted from lidar scans on the manifold of affine subspaces, known as the affine Grassmannian.
Line and plane correspondences are matched using our graph-based data association framework and subsequently registered in the least-squares sense.
Compared to pole-only approaches and plane-only approaches, our 3D affine Grassmannian method yields a \SI{71}{\percent} and \SI{325}{\percent} increase respectively to loop closure recall at \SI{100}{\percent} precision on the KITTI dataset and can provide frame alignment with less than \SI{10}{\cm} and \SI{1}{\deg} of error.
\end{abstract}

\section{Introduction}\label{sec:intro}

Geometric verification provides a critical line of defense against incorrect loop closure, which can lead to disastrous map distortion and estimation error.
Place recognition modules attempt to suggest previously explored areas that are similar to current local sensor observations, but require a geometric verification step to confirm the loop closure hypothesis and to provide a geometric constraint between the pair of associated poses.
These constraints are extremely valuable in reducing odometric drift present in simultaneous localization and mapping (SLAM) systems~\cite{cadena2016past}, so long as they are correct.
The core challenge of place recognition and geometric verification is associating current local observations with previously processed observations without relying on an initial guess.
This challenge is known as global data association~\cite{durrant2006simultaneous,bailey2006simultaneous} and is at the heart of many perception problems, such as extrinsic calibration, multi-robot map merging, loop closure detection, and global (re)localization.

In the visual place recognition~\cite{lowry2015visual} setting, image features are commonly used in bag-of-words techniques~\cite{galvez2012bags} for loop candidate retrieval and geometric verification.
However, appearance-based methods are sensitive to illumination, weather, and viewpoint changes and can fail to detect loop closures in these settings.
Alternatively, geometric-based methods~\cite{yin2018locnet,kim2018scan,chen2021auro} utilizing 3D lidar sensors are more resilient to these changes, but come at the expense of processing and storing hundreds of thousands of point measurements per second.
To maintain the benefits of geometric data, but to reduce the storage and computational costs of large point maps, some lidar odometry and SLAM systems use geometric primitives like lines and planes instead of points~\cite{brenner2009global,pathak2010online,schaefer2019long,kummerle2019accurate,cao2021lidar}.
In addition to providing lightweight maps with high-level semantic information, navigating using explicit planes extracted from the environment provides extra information over points and has lead to improved, low-drift odometry~\cite{kaess2015simultaneous,hsiao2017keyframe,geneva2018lips}.
In fact, even utilizing \emph{points} (momentarily ignoring the storage costs) that exhibit strong local planarity have allowed for high-quality lidar-based odometry systems~\cite{zhang2014loam,shan2018lego}.
While existing works either use lines/poles or planes (often in 2D) for global data association, a remaining challenge is performing global data association using 3D lines and planes simultaneously.
We present an efficient and robust method for global data association and geometric verification amenable to any combination of points, lines, and planes.

\begin{figure}[t]
    \centering
    \includegraphics[trim=1cm 1cm 1cm 2cm, clip, width=\columnwidth]{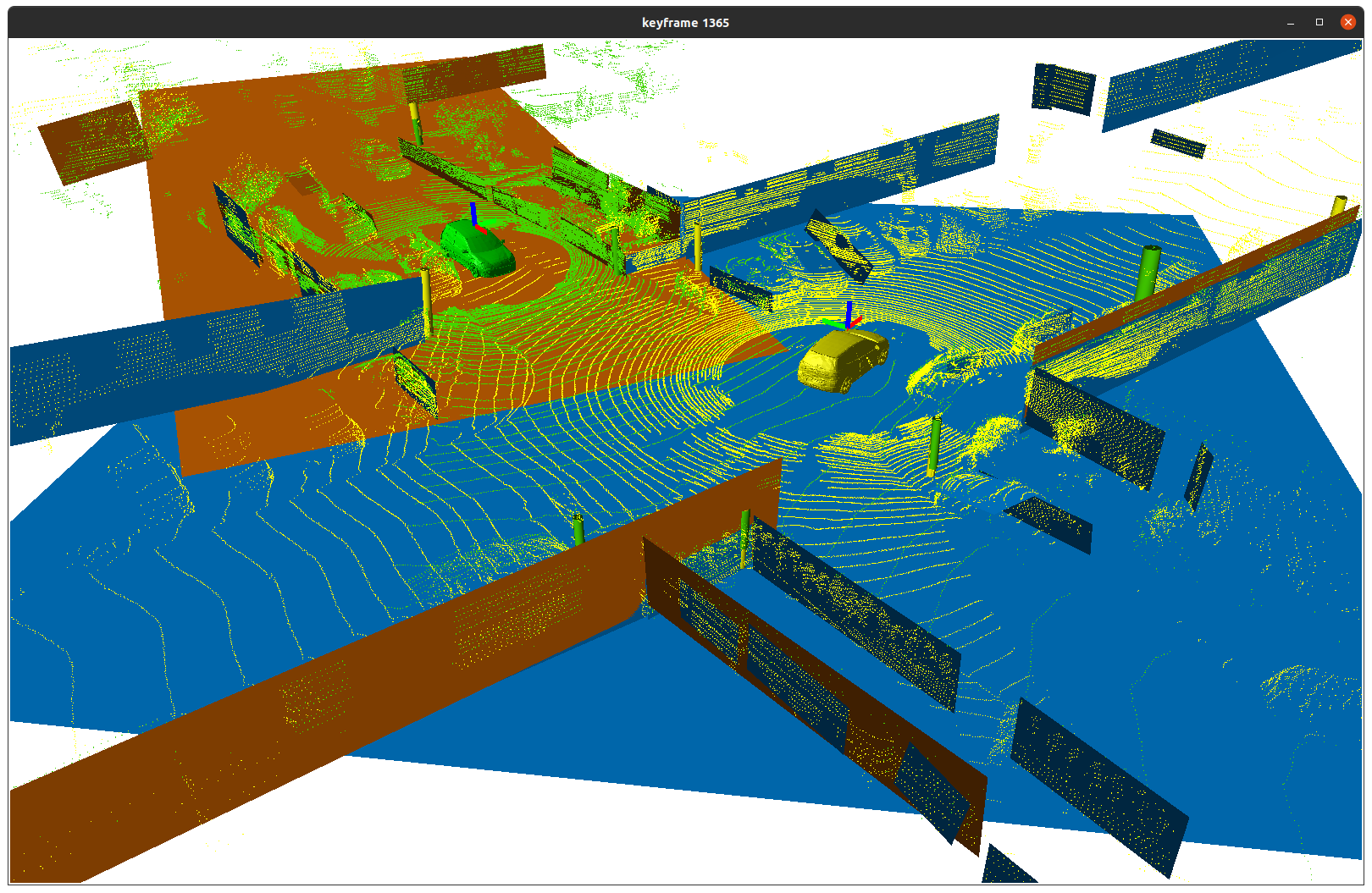}
    \caption{Successful alignment between the lidar scans of a loop closure hypothesis.
    Sensor origins are denoted by the green and yellow cars, which are \SI{18}{\m} apart.
    Poles and planes extracted from each lidar scan are represented as 3D affine Grassmannian elements.
    Using the associated Riemannian metric allows for the evaluation of geometric consistency between object pairs, even between a pole and a plane.
    Object correspondences with high pairwise consistency are identified using our graph-based global data association algorithm and then used to estimate the rigid transformation between the two frames, yielding an alignment error of \SI{4}{\cm} and \SI{0.3}{\deg}.
    }
    \label{fig:teaser-image}
\end{figure}

A key novelty of our approach is in the representation of line and plane landmarks as elements of a Grassmannian manifold, which is the space of all linear subspaces.
In particular, we utilize the \emph{affine} Grassmannian manifold, which allows for the representation of affine subspaces (i.e., linear subspaces not necessarily containing the origin).
By leveraging this manifold representation, distances between simple geometric landmarks can easily be defined in a principled manner.
We use these distances between pairwise landmarks in each lidar scan to build a consistency graph, enabling the use of our robust, graph-theoretic global data association framework~\cite{lusk2021clipper} to find the largest set of landmark associations that are geometrically consistent.
Then, the rigid transformation between a pair of candidate loop closure scans can be estimated by solving a line and plane registration problem with known correspondences in the least-squares sense.
Experimental evaluation of loop closure verification on the KITTI dataset~\cite{geiger2012we} shows that our method surpasses the state-of-the-art in global data association with geometric primitives. %
Compared to pole-only approaches and plane-only approaches, our method yields a \SI{71}{\percent} and \SI{325}{\percent} increase respectively to loop closure recall at \SI{100}{\percent} precision.
In summary, our main contributions are:
\begin{itemize}
    \item the introduction of the affine Grassmannian representation of pole and plane objects for global data association, leading to geometric verification with 3D landmarks free of requirements of an initial alignment;
    \item a least squares estimator for rigid transformation using lines and planes instead of points, leading to a more accurate estimate for rotation and translation;
    \item evaluation of loop closure geometric verification on four sequences of the KITTI~\cite{geiger2012we} dataset, showing superior recall and accuracy over the state-of-the-art.
\end{itemize}
We emphasize that this is the first work using the affine Grassmannian manifold for data association, which provides a unifying and principled framework for associating points, lines, planes (or higher dimensional linear objects) in robotic loop closure and geometric verification problems.

\section{Related Work}\label{sec:relatedwork}

\textbf{Scan Matching}.
The most basic element of lidar-based navigation is \emph{local} data association, often referred to as scan matching.
It is frequently carried out by iterative closest point (ICP)~\cite{besl1992method} and its variants~\cite{rusinkiewicz2001efficient}, though care must be taken to provide a good initialization, otherwise, a wrong odometry solution can be obtained.
Bosse and Zlot~\cite{bosse2009continuous} perform scan matching on spinning 2D lidar sweeps where the correspondence generation step of ICP is informed by local shape information.
LOAM~\cite{zhang2014loam} uses feature-based scan matching by minimizing the distance between edge points and planar points of subsequent scans in a Levenberg-Marquardt (LM) framework, resulting in high-rate, low-drift odometry.
LeGO-LOAM~\cite{shan2018lego} specializes LOAM for ground vehicles with limited computation; by first extracting ground points and segmenting remaining points into local clusters, noisy points can be removed and scan matching is performed in a two-step LM optimization.

\textbf{Place Recognition}.
Scan matching alone will introduce drift over time, which can be reduced via loop closure or localization within a map.
To identify potential loop closure scan pairs, some systems extract a compact global descriptor of the scan~\cite{yin2018locnet,kim2018scan} which is then used to search for similar scans via a k-d tree.
Once the top loop candidate is identified, the rigid transformation between two scans is refined using ICP, which requires that the initial pose error relating the two scans is low and that all the points be saved for each each scan.
Descriptors of a subset of the scan could instead be extracted~\cite{bosse2013place,cop2018delight} and subsequently matched, but handcrafted features are especially known to be unstable due to the sparsity of the lidar point cloud~\cite{dewan2018learning}.
SegMap~\cite{dube2020segmap} incrementally segments new scans into clusters of a local map to overcome the sparsity of scans and to reduce the number of points required to store, after which descriptors of each cluster are used to find matches, followed by a graph-based geometric verification step.

Other graph-based methods~\cite{zhu2020gosmatch,kong2020semantic} leverage semantic information to create histogram-based global descriptors used for place retrieval, followed by RANSAC~\cite{fischler1981random} geometric verification.
Fern\'andez-Moral et al.~\cite{fernandez2013fast} present a graph-based place recognition system which matches planar patches extracted from RGB-D scans using an interpretation tree to validate various unary and binary constraints between candidate plane matches, followed by geometric verification.
Jiang et al.~\cite{jiang2020lipmatch} extend~\cite{fernandez2013fast} and introduce additional unary and binary constraints.
Some of these constraints are sensitive to viewpoint changes, thus these methods rely on close proximity of the 3D scans as an initialization, precluding their applicability in settings like global localization.

Pathak et al.~\cite{pathak2010online} use 3D plane landmarks in a relaxed graph-based SLAM and perform data association of planes using a series of geometric tests followed by a maximum consensus selection~\cite{pathak2010fast}.
Kaess~\cite{kaess2015simultaneous} extracts 3D planes from RGB-D data and proposes a novel quaternion-based representation of planes for use in SLAM which avoids the issues of overparameterized state vector in nonlinear least-squares estimation.
Geneva et al.~\cite{geneva2018lips} alternatively introduce the ``closest point'' (CP) parameterization of planes for estimation and demonstrate its advantages in lidar-inertial SLAM.
However, \cite{kaess2015simultaneous} and \cite{geneva2018lips} do not provide a means for global data association for detection of loop closures.
Zhou et al.~\cite{zhou2021pi} present an indoor smoothing and mapping system which incorporates a plane-based bundle adjustment.
Loop closures candidates, identified by previous keyframes in close proximity, are verified by first matching plane CP vectors, followed by a plane-based RANSAC.

Pole-based localization methods~\cite{schaefer2019long,kummerle2019accurate,wilbers2019localization} commonly treat poles as 2D points based on their intersection with the ground plane and use point-based registration methods for geometric verification given an initial guess.
Brenner~\cite{brenner2009global} investigates the use of upright poles extracted from lidar scans to construct descriptors for global localization.
Schlichting and Brenner~\cite{schlichting2014localization} extend this descriptor to include upright planes, but effectively treat poles and planes as 2D points and lines.
Cao et al.~\cite{cao2021lidar} perform object-level SLAM using poles, walls, and parked cars as landmarks and propose to use pole positions within a scan to create a scan signature used for detecting loops.
Upon detecting a pair of scans as a loop candidate, clusters of pole-points are matched~\cite{cao2020accurate} and a rigid transformation is estimated in a RANSAC framework.

Our method similarly leverages poles and planes, but is not limited to treating these landmarks as 2D objects and does not make assumptions on the proximity of scans, nor does it require an initial alignment guess.
Instead, we perform global data association by identifying matches based on pairwise geometric consistency between lines and planes.
Thus, our method provides a means for obtaining the transformation between two sets of geometric objects, a key feature for place recognition in object-based maps.

\textbf{Grassmannian Manifold}.
The Grassmannian manifold has been used extensively in subspace learning~\cite{hamm2008grassmann}, especially in face recognition~\cite{huang2015projection} and appearance tracking~\cite{shirazi2014object} tasks in the computer vision community.
Rentmeesters et al.~\cite{rentmeesters2010filtering} develop an observer for subspace tracking on the manifold.
Calinon~\cite{calinon2020gaussians} outlines the use of Riemannian manifolds in robotics and notes the under-representation of the Grassmannian.

\subsection{Preliminaries}\label{sec:preliminaries}

We briefly introduce the Grassmannian manifold.
For a more comprehensive introduction, we refer the reader to~\cite{edelman1998geometry}.
The Grassmannian is the space of $k$-dimensional subspaces of $\mathbb{R}^n$, denoted $\Gr(k,n)$.
For example, $\Gr(1,3)$ represent 3D lines containing the origin.
An element $\bA\in\Gr(k,n)$ is represented by an orthonormal matrix $A\in\mathbb{R}^{n\times k}$ whose columns form an orthonormal basis of $\bA$.
Note that the choice of $A$ is not unique.
The geodesic distance between two subspaces $\bA_1\in\Gr(k_1,n)$ and $\bA_2\in\Gr(k_2,n)$ is
\begin{equation}
d_\mathrm{Gr}(\bA_1, \bA_2) = \left(\sum_{i=1}^{\min(k_1,k_2)} \theta_i^2\right)^{1/2}
\end{equation}
where $\theta_i$ are known as the principal angles~\cite{edelman1998geometry}.
These angles can be computed via the singular value decomposition (SVD) of the corresponding orthonormal matrices of $\bA_1$ and $\bA_2$,
\begin{equation}
A_1^\top A_2 = U\, \mathrm{diag}(\cos\theta_1, \dots, \cos\theta_k )\, V^\top.
\end{equation}
Note that if the subspaces are of unequal dimension, the number of principal angles is equal to the smaller dimension of the two.

\begin{figure}[t]
    \centering
    \begin{subfigure}[b]{0.49\columnwidth}
        \includeinkscape[pretex=\footnotesize,width=\columnwidth]{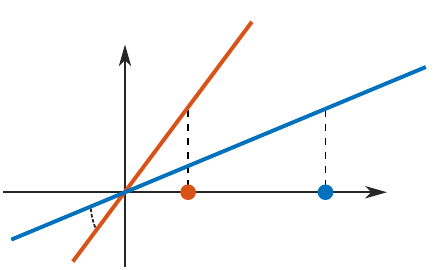}
        \caption{}
        \label{fig:graffexample}
    \end{subfigure}
    \begin{subfigure}[b]{0.49\columnwidth}
        \includegraphics[trim = 0mm 0mm 0mm 0mm, clip, width=1\linewidth]{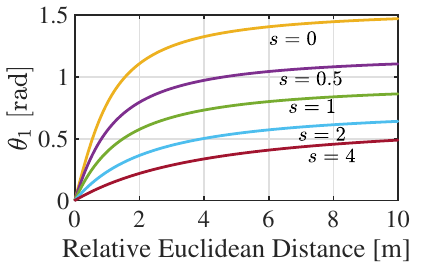}
        \caption{}
        \label{fig:graffsensitivity}
    \end{subfigure}
    \caption{(a) Example of a point in $\Graff(0,1)$ being embedded as a line in $\Gr(1,2)$.
    The principal angle between these two linear subspaces is $\theta_1$.
    (b) When applied directly, $d_\Graff$ is not invariant to global translation $s$.
    }
    \label{fig:graffexample-both}
\end{figure}

We are specifically interested in affine subspaces of $\mathbb{R}^3$, e.g., lines and planes that are at some distance away from the origin.
In analogy to $\mathrm{Gr}(k,n)$, the set of $k$-dimensional affine subspaces constitute a smooth manifold called the \emph{affine Grassmannian}, denoted $\Graff(k,n)$~\cite{lim2021grassmannian}.
We write an element of this manifold as $\bY=\bA+b\in\Graff(k,n)$ with affine coordinates $[A,b]\in\mathbb{R}^{n\times(k+1)}$, where $A\in\mathbb{R}^{n\times k}$ is an orthonormal matrix and $b\in\mathbb{R}^n$ is the displacement of $\bA$ from the origin.
We emphasize that $\Graff(k,n)\neq\Gr(k,n)\times\mathbb{R}^n$.
Instead, an element $\bY\in\Graff(k,n)$ is treated as a higher-order subspace via the embedding
\begin{align}
j:\Graff(k,n)&\hookrightarrow\Gr(k+1,n+1), \notag \\ 
\bA+b &\mapsto \mathrm{span}(\bA\cup\{b+e_{n+1}\}),
\end{align}
where $e_{n+1} = (0,\dots,0,1)^\top\in\mathbb{R}^{n+1}$ (see \cite[Theorem 1]{lim2021grassmannian}).
Fig.~\ref{fig:graffexample} shows an example of a point in $\mathbb{R}$ being embedded as a line in $\mathbb{R}^2$.

The Stiefel coordinates of $\bY\in\Graff(k,n)$,
\begin{equation}
Y =
\begin{bmatrix}
A & b_0/\sqrt{1+\|b_0\|^2} \\
0 & 1/\sqrt{1+\|b_0\|^2}
\end{bmatrix}\in\mathbb{R}^{(n+1)\times(k+1)},
\end{equation}
allow for the computation of distances between two affine subspaces using the Grassmannian metric,
\begin{equation}
d_\Graff(\bY_1,\bY_2) = d_\Gr(j(\bY_1),j(\bY_2)),
\end{equation}
with principal angles computed via the SVD of $Y_1^\top Y_2$.
The vector $b_0\in\mathbb{R}^n$ is the orthogonal displacement of $\bA$, which is the projection of $b$ onto the left nullspace of $A$ s.t. $A^\top b_0=0$.

For convenience, the line $\bY^\ell\in\Graff(1,3)$ may also be represented in point-direction form as $\ell = [A;b]\in\mathbb{R}^6$, and a plane $\bY^\pi\in\Graff(2,3)$ may be represented in Hesse normal form as $\pi = [n;d]\in\mathbb{R}^4$ where $n = \mathrm{ker}\,A^\top$ and $d = \|b_0\|$.
Under a rigid transformation $T=(R,t)\in\mathrm{SE}(3)$, the transformation law of lines and planes can be written
\begin{align}
\ell' &= f_\ell(\ell,R,t) := \begin{bmatrix}RA&Rb+t\end{bmatrix}^\top \\
\pi'  &= f_\pi(\pi,R,t) := T^{-\top}\pi.
\end{align}

\section{Method}\label{sec:method}

Given a candidate pair of lidar scans produced by, e.g., matching global scan descriptors~\cite{zhu2020gosmatch} or comparison with past keyframes~\cite{jiang2020lipmatch}, we seek to geometrically verify the loop pair and produce a relative transformation between the two sensor poses.
In the following discussion, we assume that scan $i$ has already had $l_i$ lines and $p_i$ planes extracted, and we refer to them collectively as objects $s_{i,a}\in{\S_i = \{ \bY^\ell_1,\dots,\bY^\ell_{l_i}, \bY^\pi_1,\dots,\bY^\pi_{p_i}\}}$.
Our method is comprised of the following steps: (i) constructing a consistency graph based on pairwise object distances in each scan, (ii) identifying object correspondences via the densest fully-connected subgraph in the consistency graph, and (iii) estimating a rigid transformation based on object correspondences.

\begin{figure}[t]
    \centering
    \includeinkscape[pretex=\footnotesize,width=\columnwidth]{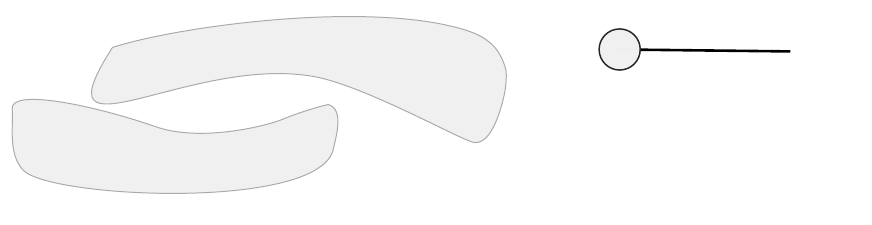}
    \caption{
    Construction of a consistency graph.
    Using $d_\Graff$, the distance between a line and a plane in scan $\S_i$ (\tikzcircle[scan1blue,fill=scan1blue]{1.5pt}) is compared to the distance between the two corresponding objects in $\S_j$ (\tikzcircle[scan2red,fill=scan2red]{1.5pt}).
    The consistency of these two distances is evaluated using \eqref{eq:consistency} and the edge $(u_1,u_2)$ is so weighted.
    }
    \label{fig:consistencygraph}
\end{figure}

\subsection{Consistency Graph Construction}\label{sec:consistency}

A consistency graph for two scans $\mathcal{S}_i$, $\mathcal{S}_j$ is an undirected weighted graph $\mathcal{G}=(\mathcal{V},\mathcal{E},w)$ with potential object correspondences $s_{i,a}\leftrightarrow s_{j,b}$ as vertices, edges between consistent correspondences, and a weighting function $w:\mathcal{E}\to[0,1]$ that evaluates the strength of consistency.
A pair of correspondences $u_1,u_2\in\mathcal{V}$ is consistent if the distance between the underlying objects $s_{i,a}\in\S_i,s_{j,b}\in\S_j$ satisfies
\begin{equation}\label{eq:consistency}
c_{u_1,u_2} = |d(s_{i,u_1^a},\,s_{i,u_2^a}) - d(s_{j,u_1^b},\,s_{j,u_2^b})| < \epsilon,
\end{equation}
for some distance function $d$.
Note that the two distances in \eqref{eq:consistency} are between objects \emph{internal} to scans $\S_i$ and $\S_j$, respectively.
If a pair of correspondences are deemed consistent, the corresponding edge is attributed the weight $w(u_1,u_2):=f(c_{u_1,u_2})$, for some choice of ${f:\mathbb{R}_+\to[0,1]}$ that scores very consistent pairs close to 1.
In this paper, we choose $f(c):=\exp(-c^2/2\sigma^2)$ for simplicity, though other appropriate functions could be used.
Given a consistency graph, correspondences are selected that maximize consistency, further explained in Section~\ref{sec:clipper}.

The distance function $d$ must be carefully chosen to ensure accuracy of graph-based data association.
In particular, we desire \eqref{eq:consistency} to hold when $s_{j,u_1^b},\,s_{j,u_2^b}$ are the transformed versions of $s_{i,u_1^a},\,s_{i,u_2^a}$, respectively.
This property is called invariance and leads to subgraphs of the consistency graph that indicate a set of object matches in a loop pair.
\begin{definition}\label{defn:invariance}
A distance $d:X\times X\to\mathbb{R}$ is \emph{invariant} if $d(x_1,x_2) = d(x_1',x_2')$, where $x_1',x_2'\in X$ are the transformation of $x_1,x_2\in X$ under $T\in\mathrm{SE}(3)$, respectively.
\end{definition}
We establish the invariance of the metric $d_\Graff$ to rotation and, under careful application, translation.
\begin{prop}\label{prop:invariance}
For elements $\bY_1\in\Graff(k_1,3)$, $\bY_2\in\Graff(k_2,3)$ with affine coordinates $[A_1,b_1]$ and $[A_2,b_2]$, the affine Grassmannian metric $d_\Graff$ is invariant if the affine components are first \emph{shifted} to the origin, i.e., if both $b_1$ and $b_2$ are first translated by $-b_1$.
\end{prop}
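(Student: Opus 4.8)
The plan is to reduce the general rigid-motion case to the rotation-only case, which is exactly what the prescribed shift is engineered to achieve. Throughout I use the affine-coordinate action of $T=(R,t)\in\SE{3}$ on an element $\bY=\bA+b$, namely $[A,b]\mapsto[RA,\,Rb+t]$, which sends the direction $\bA$ to $R\bA$ and any point $b$ on the subspace to $Rb+t$.

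First I would prove that $d_\Graff$ is invariant under a pure rotation ($t=0$). The crucial fact is that the orthogonal displacement is \emph{equivariant}: because the left nullspace of $RA$ is the rotated left nullspace of $A$, the projector onto it is $R\,P\,R^\top$ (with $P$ the original projector), so $b_0'=R\,P\,R^\top(Rb)=Rb_0$ and hence $\|b_0'\|=\|b_0\|$. Substituting into the Stiefel coordinates shows that $Y'=Q\,Y$ with the single orthogonal matrix $Q=\mathrm{diag}(R,1)\in\SO{4}$. Therefore $(Y_1')^\top Y_2'=Y_1^\top Q^\top Q\,Y_2=Y_1^\top Y_2$, so the singular values, the principal angles, and thus $d_\Graff$ are unchanged.

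Next I would dispatch the full transformation using the shift. Under $T=(R,t)$ the two displacements become $Rb_1+t$ and $Rb_2+t$. Subtracting the transformed first displacement $Rb_1+t$ from both leaves displacements $0$ and $R(b_2-b_1)$; subtracting $b_1$ from both of the original displacements leaves $0$ and $b_2-b_1$. The translation $t$ cancels identically, so the two shifted configurations differ by the rotation $R$ alone. Invoking the rotation invariance established above gives $d_\Graff$ equal on the two shifted pairs, which is precisely the claimed invariance in the sense of Definition~\ref{defn:invariance}.

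The step I expect to be the main obstacle is the first one: verifying the equivariance $b_0'=Rb_0$ and confirming that the induced action on Stiefel coordinates is a single left multiplication by an orthogonal $Q$ (the orthonormal-basis ambiguity in $A$ only right-multiplies $Y$ by an orthogonal factor and so leaves the singular values of $Y_1^\top Y_2$ intact). The remaining steps are bookkeeping. One subtlety worth stating explicitly is that the anchor being subtracted differs between the two pairs --- $b_1$ for the original, $Rb_1+t$ for the transformed --- and it is exactly this transformation-consistent choice of anchor that forces $t$ to vanish; applying a fixed global shift instead would reproduce the translation sensitivity illustrated in Fig.~\ref{fig:graffsensitivity}.
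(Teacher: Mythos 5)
Your proposal is correct and follows essentially the same route as the paper's proof: both reduce the claim to showing that $Y_1^\top Y_2$ is unchanged, using the shift by the (transformed) first displacement to cancel $t$ and the orthogonality of $R$ to handle the rotation. The only difference is organizational --- you factor the argument into an explicit rotation-invariance lemma (including the equivariance $b_0\mapsto Rb_0$ and the left action $Y\mapsto \mathrm{diag}(R,1)\,Y$ on Stiefel coordinates), steps the paper's direct computation of $\bar{Y}_1^\top\bar{Y}_2$ uses implicitly.
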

\begin{proof}
See Appendix A.
\end{proof}
The intuition of Proposition~\ref{prop:invariance} can be understood from Fig.~\ref{fig:graffexample-both}.
As $\bY_1$ and $\bY_2$ are together translated further from the origin, the principal angle between $j(\bY_1)$ and $j(\bY_2)$ decreases to zero in the limit.
However, the distance between the affine components of $\bY_1$ and $\bY_2$ remains the same, no matter the translation.
By first shifting the affine components, we remove the dependence of the absolute translation in the computation of the principal angle, while maintaining the dependence on the \emph{relative} translation between $\bY_1$ and $\bY_2$.

A remaining challenge is to address the insensitivity of $d_\Graff$ to the Euclidean distance between affine components of objects.
The yellow curve ($s=0$) in Fig.~\ref{fig:graffsensitivity} represents the principal angle between $\bY_1,\bY_2\in\Graff(0,1)$ after shifting them as per Proposition~\ref{prop:invariance}, as a function of the Euclidean distance between $\bY_1$ and $\bY_2$.
Observe that after a distance of approximately \SI{2}{\meter}, the curve flattens significantly as it asymptotes towards $\tfrac{\pi}{2}$.
This nonlinearity leads to poor discrimination between pairs of correspondences whose internal objects are far apart in the Euclidean sense.
To combat this when calculating pairwise affine Grassmannian distances, we first scale the affine component of each $\bY_i$ by a constant parameter $\rho$ so that the affine coordinates of $\bY_i$ become $[A_i,b_i/\rho]$.
The choice of $\rho$ depends on the average Euclidean distance between objects in the environment and its effect is to bring principal angles into the linear regime.
The selection of $\rho$ is discussed further in Section~\ref{sec:exp-scaling}.

With Proposition~\ref{prop:invariance} and the scaling parameter $\rho$ in hand, a consistency graph between objects in $\S_i$ and $\S_j$ can be constructed.
We establish initial correspondences between each object in $\S_i$ with each object of $\S_j$ so long as the objects are of the same dimensions $k$ (i.e., we do not consider lines associated to planes).
Given additional information such as color, scan intensity, planar patch area, or pole radius, this initial set of correspondences could be refined, but would rely on accurately segmenting lines and planes across potentially wide baselines.
While we restrict object correspondences to be of the same dimension, the machinery we have developed allows for computing the consistency of two correspondences whose internal pair of objects have differing dimension.
Evaluating the consistency of a correspondence pair in our affine Grassmannian framework is illustrated in Fig.~\ref{fig:consistencygraph}.

\subsection{Graph-based Global Data Association}\label{sec:clipper}

Given a consistency graph, the task of matching objects from two scans is reduced to identifying the densest clique of consistent correspondences, formalized as the problem
\begin{gather}\label{eq:densestclique}
\begin{array}{ll}
\underset{u \in \{0,1\}^m}{\text{maximize}} & \dfrac{u^\top  M \, u}{u^\top u}
\\
\text{subject to} & u_i \, u_j = 0  \quad \text{if}~ M(i,j)=0, ~ \forall_{i,j},
\end{array}
\end{gather}
where $M\in[0,1]^{m\times m}$ is the weighted adjacency matrix (i.e., from $w$ as defined in Section~\ref{sec:consistency}) with ones on the diagonal, and ${u\in\{0,1\}^m}$ indicates a consistent set of correspondences.
Note that we choose to maximize the \emph{density} of correspondences rather than the cardinality (e.g., maximum clique) as our previous work has found this objective to produce more accurate results~\cite{lusk2021clipper}.
Problem~\eqref{eq:densestclique} is NP-hard, therefore we solve a particular relaxation which yields high accuracy solutions via our efficient CLIPPER algorithm (see ~\cite{lusk2021clipper} for more details).

\subsection{Transformation Estimation}

Given pairwise correspondences between objects in $\S_i$ and $\S_j$, consider finding the best rigid transformation to simultaneously align matched lines and planes by solving the optimization problem
\begin{equation}
\min_{\substack{R\in\mathrm{SO}(3),\\ t\in\mathbb{R}^3}}
\sum_{i=1}^{p} \|\pi_i' - f_\pi(\pi_i,R,t)\|^2
+
\sum_{i=1}^{l} \|\ell_i' - f_\ell(\ell_i,R,t)\|^2.
\end{equation}
This problem can be solved in closed-form by first solving for the rotation via SVD, then solving for the translation via least squares, similar to Arun's method for point cloud registration~\cite{arun1987least}.
The benefit of using the line and plane geometry directly, as opposed to a point parameterization, is twofold.
First, it allows the use of the full information present in the infinite plane or line, i.e., distance from origin as well as orientation.
Second, it does not require assumptions about where the ``centroid'' of the plane or line is, which is undefined for infinite planes and lines and requires consistent segmentation of objects across scans.
Together, these benefits lead to a more accurate rigid transformation estimate when aligning line and plane features.
\begin{figure}[t]
    \centering
    \begin{subfigure}[b]{0.49\columnwidth}
        \includegraphics[trim = 0mm 0mm 0mm 0mm, clip, width=1\linewidth]{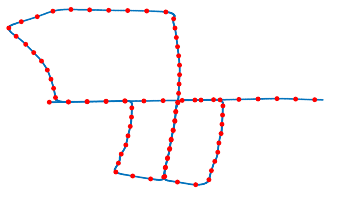}
        \caption{}
        \label{fig:kitti-traj-kfs}
    \end{subfigure}
    \begin{subfigure}[b]{0.49\columnwidth}
        \includegraphics[trim = 0mm 0mm 0mm 0mm, clip, width=1\linewidth]{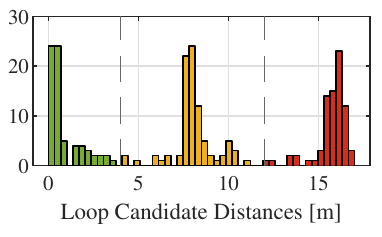}
        \caption{}
        \label{fig:candidate-dists}
    \end{subfigure}
    \caption{Candidate loop closure pair generation.
    (a) Keyframe poses of KITTI 05 are shown in red.
    (b) A histogram of candidate loop pair distances for all KITTI sequences used.
    Each keyframe is paired with three nearby previously visited poses of approximately \SI{0}{\meter}, \SI{8}{\meter}, and \SI{16}{\meter}.
    Candidate loops are grouped into $71$ easy pairs, $87$ medium pairs, and $69$ hard pairs. %
    }
    \label{fig:loop-candidate-generation}
\end{figure}

\section{Experiments}\label{sec:experiments}

We evaluate our global data association method using candidate loop closure pairs from KITTI~\cite{geiger2012we} sequences 00, 02, 05, and 08.
We compare our method, called GraffMatch, with a pole-only method~\cite{cao2021lidar} based on 2D cluster matching~\cite{cao2020accurate}, and a plane-only method~\cite{zhou2021pi} that attempts to match planes via nearest neighbor search on CP parameterization~\cite{geneva2018lips} followed by RANSAC~\cite{fischler1981random}.
We adapt the pole-only method~\cite{cao2021lidar} to 3D and denote it as PoleMatch, while the plane-only method is denoted PlaneMatch.
The algorithms are implemented in MATLAB\footnote{\href{https://github.com/mit-acl/clipper}{https://github.com/mit-acl/clipper}} and executed on an i9-7920X CPU with 64 GB RAM.

\subsection{Dataset}\label{sec:dataset}

Each sequence contains a trajectory of ground truth poses $T_i = (R_i,t_i)\in\mathrm{SE}(3)$.
We generate potential loop candidates by sampling $K$ keyframe poses $\bar{T}_k,\forall\,k\in[K]$ along the trajectory with a stride of \SI{20}{\m}, e.g., see Fig.~\ref{fig:kitti-traj-kfs}.
Let the set of all poses $T_i$ leading up to keyframe $\bar{T}_k$ be denoted $\mathcal{T}_k$.
The set of previously visited poses near keyframe $\bar{T}_k$ is then
\begin{equation*}
\mathcal{X}_k = \{ T_i\;\colon \|t_k-\bar{t}_i\| < r,\;\forall\,T_i\in\mathcal{T}_{k-1} \},
\end{equation*}
where we have set $r=\SI{20}{\m}$ to prevent selecting a loop pair without overlapping scans.
From each $\mathcal{X}_k\ne\emptyset$, three loop candidates are generated with $\bar{T}_k$ based on straight-line distance.
We used distances of \SI{0}{\m}, \SI{8}{\m}, and \SI{16}{\m}, for easy, medium, and hard difficulty, respectively.
These three cases allow us to evaluate each method's sensitivity to noise, baseline, and partial overlap.
Some keyframes may not have a loop candidate at a specified distance, resulting in an unequal number of easy, medium, and hard cases.
A histogram of these distances is shown in Fig.~\ref{fig:candidate-dists} for all KITTI sequences.

Pole and plane features are extracted from each loop candidate lidar scan and are used as input for each algorithm for global data association.
Poles are extracted as lines by leveraging the SemanticKITTI~\cite{behley2019semantickitti} dataset for simplicity.
Given points corresponding to the pole or trunk classes, we use DBSCAN~\cite{ester1996density} implemented in Open3D~\cite{Zhou2018} to generate clusters, from which PCA~\cite{shlens2014tutorial} is used to estimate a line.
Planar patches are extracted from the lidar scan using our implementation\footnote{\href{https://github.com/plusk01/pointcloud-plane-segmentation}{https://github.com/plusk01/pointcloud-plane-segmentation}} of~\cite{araujo2020robust}.
Because planar patches are bounded, there may be multiple planar patches that correspond to the same infinite plane.

\begin{figure}[t]
    \centering
    \includegraphics[trim=0cm 0cm 0cm 0cm, clip, width=\columnwidth]{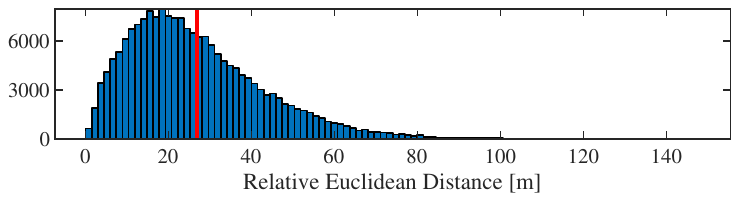}
    \caption{
    Pairwise object distances from KITTI 00, 02, 05, and 08.
    The mean is \SI[separate-uncertainty=true,multi-part-units=single]{27\pm16}{\meter}.
    Using this data, we choose the scaling parameter as $\rho=40$.
    }
    \vskip0.1in
    \label{fig:kitti-pdists}
\end{figure}

\subsection{Selection of Scaling Parameter}\label{sec:exp-scaling}

The scaling parameter $\rho$ (see Section~\ref{sec:consistency}) is chosen so that the pairwise affine Grassmannian distance lies in the linear regime and is therefore more sensitive when scoring consistencies.
The Velodyne HDL-64E used in KITTI has a range of \SIrange{50}{120}{\meter}, with an average point range in the KITTI dataset of approximately \SI{80}{\meter}.
In terms of pairwise object distances, we find that the average Euclidean distance is \SI[separate-uncertainty=true,multi-part-units=single]{27\pm16}{\meter}, as shown in Fig.~\ref{fig:kitti-pdists}.
Therefore, we select $\rho=40$ so that relative Euclidean distances of \SI{80}{\meter} will be scaled to \SI{2}{\meter}, which is at the end of the linear regime (see Fig.~\ref{fig:graffsensitivity}).

\begin{table}[!t] %
\centering
\caption{
Recall at \SI{100}{\percent} precision.
Divided into easy (E), medium (M), hard (H) cases based on straight-line distance between loop candidate poses.
}
\setlength{\tabcolsep}{3.3pt}
\ra{1.2}
\begin{tabular}{c c c c c c c c c c c c}
\toprule
Seq.  & \multicolumn{3}{c}{GraffMatch (Ours)} && \multicolumn{3}{c}{PoleMatch} && \multicolumn{3}{c}{PlaneMatch} \\
\cmidrule{2-4}\cmidrule{6-8}\cmidrule{10-12}
      & E & M & H   && E & M & H   && E & M & H \\
\toprule
$00$  &  \SI{91}{\percent} & \SI{78}{\percent} & \SI{46}{\percent} &&  \SI{69}{\percent} & \SI{43}{\percent} & \SI{41}{\percent} && \SI{66}{\percent} & \SI{3}{\percent} & \SI{3}{\percent} \\
$02$  &  \SI{100}{\percent} & \SI{78}{\percent} & \SI{50}{\percent} &&  \SI{44}{\percent} & \SI{33}{\percent} & \SI{17}{\percent} && \SI{33}{\percent} & \SI{11}{\percent} & \SI{0}{\percent} \\
$05$  &  \SI{95}{\percent} & \SI{68}{\percent} & \SI{35}{\percent} &&  \SI{42}{\percent} & \SI{41}{\percent} & \SI{18}{\percent} && \SI{42}{\percent} & \SI{14}{\percent} & \SI{6}{\percent} \\
$08$  &  \SI{100}{\percent} & \SI{79}{\percent} & \SI{78}{\percent} &&  \SI{55}{\percent} & \SI{32}{\percent} & \SI{44}{\percent} && \SI{0}{\percent} & \SI{0}{\percent} & \SI{0}{\percent} \\
\midrule
all  &  \SI{94}{\percent} & \SI{76}{\percent} & \SI{48}{\percent} &&  \SI{56}{\percent} & \SI{39}{\percent} & \SI{33}{\percent} && \SI{45}{\percent} & \SI{6}{\percent} & \SI{3}{\percent} \\
\bottomrule  
\end{tabular}
\label{tbl:recall}
\end{table}

\begin{table}[t] %
\centering
\caption{
Median translation and rotation alignment error of all successful loop closures, divided into easy (E), medium (M), hard (H) cases.
}
\setlength{\tabcolsep}{3.1pt}
\ra{1.2}
\begin{tabular}{c c c c c c c c c c c c}
\toprule
  & \multicolumn{3}{c}{GraffMatch (Ours)} && \multicolumn{3}{c}{PoleMatch} && \multicolumn{3}{c}{PlaneMatch} \\
\cmidrule{2-4}\cmidrule{6-8}\cmidrule{10-12}
      & E & M & H   && E & M & H   && E & M & H \\
\toprule
$\tilde{t}_\text{err}$ [cm]  &  $9.1$ & $17.3$ & $25.7$ && $10.4$ & $23.2$ & $16.0$ && $11.8$ & $17.3$ & $25.1$ \\
$\tilde{\theta}_\text{err}$ [deg]  &  $0.57$ & $0.92$ & $1.32$ && $0.74$ & $1.6$ & $1.72$ && $0.97$ & $1.78$ & $2.58$ \\
\bottomrule  
\end{tabular}
\label{tbl:alignment-error}
\end{table}

\subsection{Loop Closure Results}
Global data association is attempted on each loop closure candidate, after which line and plane matches are used to estimate a rigid transformation $\hat{T}^i_j$ of scan $j$ w.r.t scan $i$.
The quality of loop closure is evaluated by comparing $\hat{T}^i_j$ with the ground truth $T^i_j$ and calculating the rotation and translation error.
If the rotation error is less than \SI{5}{\degree} and the translation error is less than \SI{1}{\meter}, the loop closure is accepted.
If the number of matches returned by an algorithm is less than 3, the loop closure attempt is considered failed.
The parameters used for GraffMatch (see \eqref{eq:consistency}) are $\epsilon=0.2$ and $\sigma=0.02$.

Table~\ref{tbl:recall} lists the recall at \SI{100}{\percent} precision for each tested KITTI sequence.
As expected, utilizing both poles and planes in GraffMatch produces a higher number of successful loop closures.
The number of successful PoleMatch loop closures is low due to too few poles or variation of extracted poles across lidar scans (i.e., in a single scan, few lidar point returns may exist for a pole-like object, leading to a noisy centroid).
PlaneMatch also scores low in general and even fails to successfully match and align planes in all of sequence 08, where the car drives through previously visited streets in the opposite direction.
Because the CP parameterization heavily depends on the origin and orientation of the lidar sensor frame, successful CP plane matching requires a very good initialization, as in the easy case where PlaneMatch performs at its best.
This requirement can be problematic in the presence of odometry-only measurements, as drift could prevent loop closure from ever succeeding.

\begin{figure}[t]
    \centering
    \includegraphics[trim=0cm 0cm 0cm 0cm, clip, width=\columnwidth]{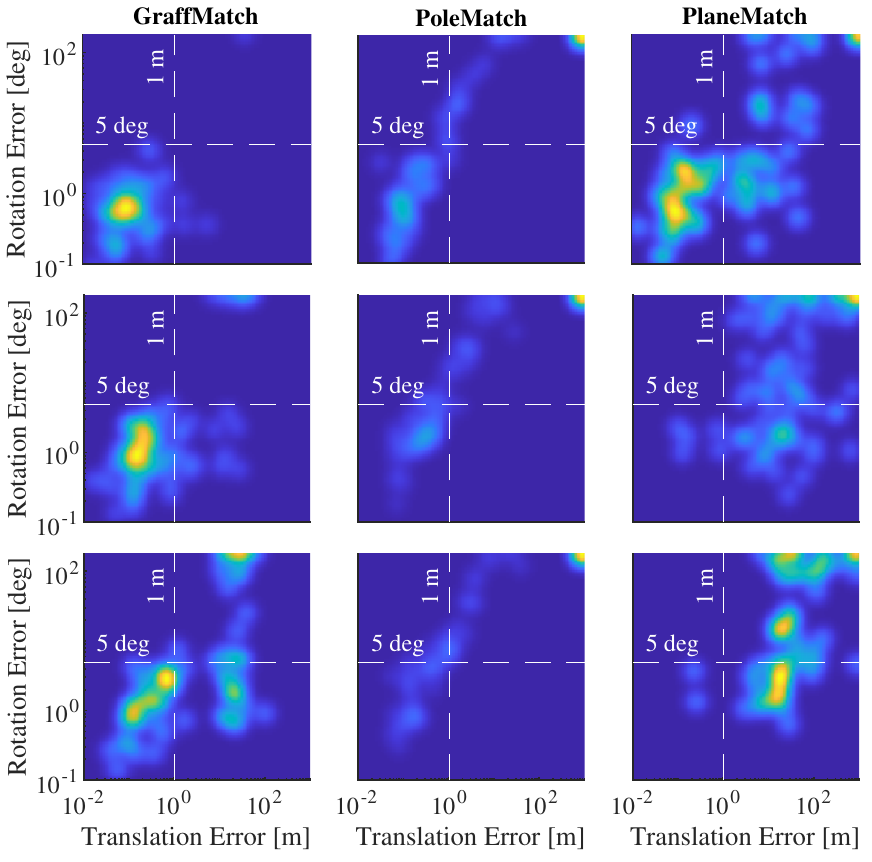}
    \caption{
    Alignment error for loop closure pairs, visualized as a grid of likelihood-normalized density plots.
    From left to right, the grid columns correspond to GraffMatch (ours), PoleMatch, and PlaneMatch.
    From top to bottom, the grid rows correspond to the easy, medium, and hard cases.
    For each case, GraffMatch achieves the highest recall, indicated by the high density of points in the low-translation, low-rotation error regime.
    PoleMatch fails to generate enough pole correspondences in many loop closures due to the scarcity of poles; in these cases, the error is set to a high value (upper-right corner).
    PlaneMatch performs at its best in the easy case when lidar scans have very close initial poses, but breaks down as the baseline distance increases.
    }
    \label{fig:kitti-alignment-err}
\end{figure}

Fig.~\ref{fig:kitti-alignment-err} shows the alignment error of loop candidates from all sequences as a $3\times3$ grid of density heatmaps, where columns correspond to algorithms and rows (from top to bottom) correspond to easy, medium, and hard cases.
In many cases of PoleMatch and in some cases of PlaneMatch, less than 3 matches were returned and so alignment error is set high, causing increased density in the upper-right corner.
GraffMatch is the only data association method that consistently scores in the low-translation, low-rotation error regime.
The median alignment error for successful loop closures is listed in Table~\ref{tbl:alignment-error}.

As discussed in Section~\ref{sec:consistency}, the distance function used to score consistency in our graph-theoretic framework is an important consideration.
We choose $d_\Graff$ because it allows us to score the consistency of affine subspaces pairs with arbitrary dimension in a principled manner.
Other distance functions might only consider the distance or angle between objects, for example.
Fig.~\ref{fig:recall-vs-distance} shows recall at \SI{100}{\percent} precision and compares our choice of $d_\Graff$ with four other possible distances.
The distances $d_\Gr$ and $d_{\pi\ell}$ disregard distance information, treating lines and planes as linear subspaces containing the origin, or naively using the inner product between a plane's normal vector and a line's direction vector, respectively.
The standard Euclidean distance $d_{\mathbb{R}^n}$ disregards subspace orientation and instead treats lines and planes as bounded, using their centroids as measurements.
As discussed previously in this section, using centroid requires that points be segmented into the same bounded lines and planes in every view, and thus will suffer as the baseline between loop pairs increases.
Naively combining orientation and distance information in $d_{\Gr\times\mathbb{R}^n}$ leverages all available information, but requires the weighting function $f$ (see Section~\ref{sec:consistency}) to take on an ad-hoc mixture of kernels with additional parameters, e.g., $f(c_r,c_\theta):=\exp(-c_r^2/\sigma_r^2)\exp(-c_\theta^2/\sigma_\theta^2)$.
Using $d_\Graff$ leads to a simple method of calculating distances on the manifold of affine subspaces and leads to higher recall.

\begin{figure}[t]
    \centering
    \includegraphics[trim=0cm 0cm 0cm 0cm, clip, width=\columnwidth]{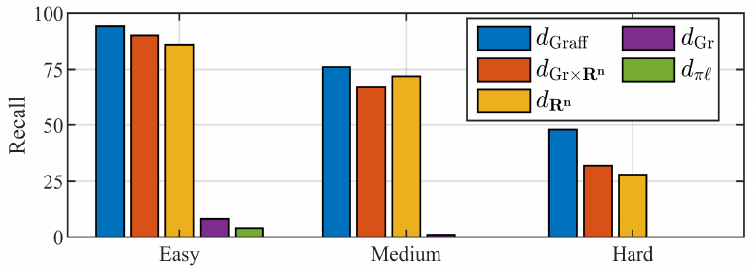}
    \caption{
    Recall at \SI{100}{\percent} precision of loop candidate alignment using different distance functions in our data association framework.
    The shifted affine Grassmannian distance $d_\Graff$, which combines line and plane `direction' with distance, provides the highest recall.
    Using centroid information ($d_\mathbb{R}^n$, $d_{\Gr\times\mathbb{R}^n}$) also gives good results, but depends on accurate line and plane segmentation.
    Using only directional information ($d_\Gr$, $d_{\pi\ell}$) performs poorly due to many objects with similar plane normals and line directions.
    }
    \label{fig:recall-vs-distance}
\end{figure}

Timing results for GraffMatch, PoleMatch, and PlaneMatch are respectively \SI[separate-uncertainty=true,multi-part-units=single]{0.076\pm0.102}{\second}, \SI[separate-uncertainty=true,multi-part-units=single]{0.005\pm0.004}{\second}, and \SI[separate-uncertainty=true,multi-part-units=single]{0.011\pm0.003}{\second}.
Thus, GraffMatch is suitable for online operation in loop closure tasks and is a robust alternative to PoleMatch and PlaneMatch, both of which rely on assumptions to speed up their execution, but limit their accuracy.
Specifically, PoleMatch treats infinite lines as centroid points and PlaneMatch requires an initial frame alignment guess.
In our experiments, there were an average of $7$ poles and $23$ planar patches extracted per frame, resulting in an average of $650$ initial correspondences to be processed for geometric consistency.
Execution time could be reduced by leveraging additional object information to immediately discard initial matches instead of allowing each object be potentially associated with each other object (e.g., a plane with large area is unlikely to be matched to a small plane).

\section{Conclusion}\label{sec:conclusion}

We presented a global data association method that achieved high recall with low alignment error when evaluated on candidate loop closures in the KITTI dataset.
By unifying the representation of poles and planes extracted from lidar scans as affine Grassmannian manifold elements, GraffMatch widens the applicability of using geometric primitive in place of memory-intensive point cloud maps.
Importantly, leveraging the invariant shifted affine Grassmannian distance in our graph-based data association framework enables the geometric verification of place recognition candidates with a wide range of baseline distances between frames.
By removing assumptions on initial frame alignment (e.g., from noisy odometry), GraffMatch is applicable to other perception problems requiring geometric verification, such as extrinsic sensor calibration, map merging, and global relocalization.

In future work we will incorporate GraffMatch into a complete SLAM pipeline, using affine Grassmannian objects for both local and global data association.
In particular, we will investigate the estimation of lines and planes directly via subspace tracking methods, using manifold-based optimization techniques to perform online bundle adjustment of affine Grassmannian object landmarks.
\appendix

\subsection{Proof of Invariance}

\begin{repprop}{prop:invariance}

\end{repprop}
\begin{proof}
The subspace distance between $\bY_1$ and $\bY_2$ is $d_\Graff(\bY_1,\bY_2) = \|\Theta\|$, where $\Theta$ is a vector of $k=\min(k_1,k_2)$ principal angles.
These angles can be calculated via the singular value decomposition of $Y_1^\top Y_2$, the inner product of the Stiefel coordinates of $\bY_1,\bY_2$.
Without loss of generality, assume $\bY_1,\bY_2$ are shifted s.t. $b_{1}=0$.
Then,
\begin{equation}
Y_1^\top Y_2 =
\begin{bmatrix}
A_1^\top A_2 & \tfrac{1}{\eta_2}A_1^\top b_{02} \\
0 & \tfrac{1}{\eta_1\eta_2}
\end{bmatrix},
\end{equation}
where $\eta_i=\sqrt{\|b_{0i}\|^2 + 1}$.
Given $T=(R,t)\in\mathrm{SE}(3)$, let $\bar{\bY}_1,\bar{\bY}_2$ be the rotated and translated versions of $\bY_1,\bY_2$, respectively, with affine coordinates
\begin{equation}
\bY_i:[A_i,b_i] \xrightarrow{\quad T\quad} \bar{\bY}_i:[RA_i, Rb_i + t].
\end{equation}
Shifting $\bar{\bY}_1,\bar{\bY}_2$ by $-\bar{b}_1=-(Rb_1+t)$ leads to the affine coordinates $\bar{\bY}_1:[RA_1, 0]$ and
\begin{equation}
\bar{\bY}_2:[RA_2, Rb_2+t-(Rb_1+t)] = [RA_2, Rb_2],
\end{equation}
so that
\begin{equation}
\bar{Y}_1^\top \bar{Y}_2 =
\begin{bmatrix}
A_1^\top A_2 & \tfrac{1}{\eta_2}A_1^\top b_{02} \\
0 & \tfrac{1}{\eta_1\eta_2}
\end{bmatrix},
\end{equation}
which is free of $R$ and $t$ and equal to $Y_1^\top Y_2$, as desired.

\end{proof}

\balance %

\bibliographystyle{IEEEtran}
\bibliography{refs}

\end{document}